\theoremstyle{plain}
\newtheorem{theorem}{Theorem}
\theoremstyle{definition}
\newtheorem{defn}[theorem]{Definition}
\newtheorem{example}{Example}
\theoremstyle{remark}
\newtheorem{remark}{Remark}
\DeclareMathOperator{\argmin}{argmin}
\DeclareMathOperator{\diag}{diag}
\DeclareMathOperator{\TV}{TV}
\newcommand{\eps}{\epsilon}
\newcommand{\bbm}{\begin{bmatrix}}
\newcommand{\ebm}{\end{bmatrix}}
\newcommand{\R}{\mathbb{R}}
\renewcommand{\P}{\mathcal{P}}
\newcommand{\F}{\mathcal{F}}
\newcommand{\grad}{\nabla}
\begin{document}

\title[Tangent differential privacy]{Tangent differential privacy}

\author[]{Lexing Ying}
\address[Lexing Ying]{Department of Mathematics, Stanford University,  Stanford, CA 94305} \email{lexing@stanford.edu}

\thanks{The author thanks Yiping Lu for constructive discussions.}

\keywords{Differential privacy; entropic regularization.}

%\subjclass[2010]{TODO}
%This work is partially supported by NSF grants DMS-2011699 and DMS-2208163.

\begin{abstract}
  Differential privacy is a framework for protecting the identity of individual data points in the decision-making process. In this note, we propose a new form of differential privacy called tangent differential privacy. Compared with the usual differential privacy that is defined uniformly across data distributions, tangent differential privacy is tailored towards a specific data distribution of interest. It also allows for general distribution distances such as total variation distance and Wasserstein distance. In the case of risk minimization, we show that entropic regularization guarantees tangent differential privacy under rather general conditions on the risk function.
\end{abstract}

\maketitle

%----------------------------------------------------------
\section{Introduction}\label{sec:intro}

Differential privacy is a framework for protecting the identity of individual data points in the machine learning process. The most commonly discussed differential privacy is $\eps$-differential privacy. A randomized algorithm is called $\eps$-differential private if, for any two input data distributions that differ by one element, the ratio of the probabilities at any outcome is bounded by at most $\exp(\eps)$. The definition clearly shows that differential privacy is a uniform concept across all data distributions. In many machine learning applications, one often cares about a specific data distribution and asks privacy questions about when a single or a small number of data points is deleted from or added to this specific data distribution.

To address such questions, we propose here {\em tangent differential privacy}, which is a privacy concept tailored towards a specific data distribution. When applying to the case of risk minimization (such as supervised learning), we show that entropic regularization guarantees tangent differential privacy under rather general conditions.

{\bf Related work.} The concepts of $\eps$-differential privacy and $(\eps,\delta)$-differential privacy were first proposed in \cite{dwork2006calibrating,dwork2006our} and a wonderful resource for this vast literature is \cite{dwork2014algorithmic}. Several efforts have been devoted to relax or reformulate differential privacy, with examples including Renyi differential privacy \cite{mironov2017renyi}, concentrated differential privacy \cite{bun2016concentrated,dwork2016concentrated}, and Lipschitz privacy \cite{koufogiannis2015optimality}. In a broader context, other related forms of privacy concepts have also been developed, such as local differential privacy \cite{evfimievski2003limiting,kasiviswanathan2011can,duchi2013local} and the recently proposed metric privacy \cite{boedihardjo2024metric,boedihardjo2024private}. The concept of tangent differential privacy proposed here is closely related to Lipschitz privacy, though the latter is defined as a uniform concept across all data distributions.

{\bf Contents.} The rest of the note is organized as follows. Section 2 introduces the concept of tangent differential privacy. Section 3 considers the risk minimization problem and proposes entropic regularization as a solution of tangent differential privacy for both total variation and Wasserstein distances. Section 4 concludes with some discussions.

%----------------------------------------------------------
\section{Tangent differential privacy}\label{sec:tp}

Let $X$ be the metric space of the data points, and $\P(X)$ be the space of distributions over $X$. Let $W$ be the metric space of outputs, and $\P(W)$ be the space of distributions over $W$. We can think of $W$ as $\R$, the space of regression functions, or the space of neural network weights \cite{abadi2016deep}. To discuss differential privacy, let $A$ be a randomized algorithm that takes $p \in \P(X)$ and produces a randomized output $w$. Because $A$ is random, we can regard it as a (typically nonlinear) map
\[
A: \P(X) \rightarrow \P(W),
\]
taking $p(x)$ to a distribution $q(w)$. When $q(w)$ has a bounded density, we can also consider $\log\circ A$
\[
\log\circ A: \P(X) \rightarrow \F(W),
\]
taking $p(x)$ to a function $(\log q)(w)$.

Let us denote $T_p$ and $T_q$ as the tangent spaces of signed measures at $p$ and $q$, respectively. The tangent map of $A$ at $p$ is $DA_p: T_p \rightarrow T_q$. Suppose that $p$ is the data distribution of interest. For any $p'$ close to $p$, the linear approximation suggests that
\begin{equation}\label{eq:Apl}
  Ap'-Ap \approx DA_p \cdot (p'-p).
\end{equation}
In the usual setting, $p$ can be an empirical distribution with $N$ data samples $\{x_i\}$ and $p'$ is obtained by removing a distinguished sample $x_k$:
\[
p(x) = \frac{1}{N} \sum_{i=1}^N \delta_{x_i}(x), \quad
p'(x) = \frac{1}{N-1} \sum_{i\not=k} \delta_{x_i}(x).
\]
This also extends naturally to the situation where $p'$ is obtained from $p$ by changing a small number of data points. Similarly, if $T_{\log q}$ is the tangent space at $\log q$, the tangent map of $\log\circ A$ at $p$ is $D(\log\circ A)_p : T_p \rightarrow T_{\log q}$. For any $p'$ close to $p$, we have
\begin{equation}\label{eq:logApl}
  \log(Ap')-\log(Ap) \approx D(\log\circ A)_p \cdot (p'-p).
\end{equation}

When $\P(X)$ and $\P(W)$ are endowed with distances, they induce corresponding norms on $T_p$, $T_q$, and $T_{\log q}$. Here are two common examples.
\begin{example}
  Consider the total variation distance on $\P(X)$ and $\P(W)$, i.e.,
  \[
  d_{\TV}(p,p') = 2\cdot \max_{S\subset X} |p(S)-p'(S)|, \quad
  d_{\TV}(q,q') = 2\cdot \max_{S\subset W} |q(S)-q'(S)|.
  \]
  Here, we introduce an extra factor of $2$ to ensure that they are consistent with the $L^1$ norms when $X$ and $W$ are countable.  This setup results in the TV norm for $T_p$, TV norm for $T_q$, and $L^\infty$ norm for $T_{\log q}$.
\end{example}

\begin{example}
  Consider the Wasserstein-2 distance on $\P(X)$ and still the total variation distance on $\P(W)$. This results in the following weighted Sobolev norm for $T_p$:
  \[
  \|\eps\|_{\dot{H}^{-1}(p)}^2 := \min_{f:\grad\cdot(fp) = \eps} \int |f(x)|^2 p(x) dx
  \]
  for $\eps\in T_p$ \cite{peyre2018comparison}. For $q$, we still have TV norm for $T_q$ and $L^\infty$ norm for $T_{\log q}$.
\end{example}

With these preparations, we are ready to introduce the following definitions.

\begin{defn}
  $A$ is differentiable at $p$ for the norm pair $(\|\cdot\|_{T_p}, \|\cdot\|_{T_q})$ with bound $C_p$ if
  \begin{equation}\label{eq:diff}
    \|DA_p\|_{\|\cdot\|_{T_p} \rightarrow \|\cdot\|_{T_q}} \le C_p.
  \end{equation}
\end{defn}
Consider the case of $\|\cdot\|_{T_q}$ equal to the TV norm as in the previous examples. For $p'$ with $\|p'-p\|$ small, using \eqref{eq:Apl} and \eqref{eq:diff} leads to
\[
\| Ap- Ap' \|_{\TV} \le C_p \|p'-p\| + o(\|p'-p\|),
\]
i.e., for any set $S\subset W$,
\[
2\cdot |(Ap)(S)-(Ap')(S)| \le C_p \|p'-p\| + o(\|p'-p\|).
\]

\begin{defn}
  $A$ satisfies tangent differential privacy at $p$ for the norm pair $(\|\cdot\|_{T_p}, \|\cdot\|_{T_{\log q}})$ with bound $C_p$ if
  \begin{equation}\label{eq:tdp}
    \|D(\log\circ A)_p\|_{\|\cdot\|_{T_p} \rightarrow \|\cdot\|_{T_{\log q}}} \le C_p.
  \end{equation}
\end{defn}
Consider the case of $\|\cdot\|_{T_{\log q}}$ equal to the $L^\infty$ norm as above. For $p'$ with $\|p'-p\|$ small, using \eqref{eq:logApl} and \eqref{eq:tdp} leads to
\[
\| \log(Ap)- \log(Ap') \|_{L^\infty}  = \left\| \log\left(\frac{Ap}{Ap'}\right) \right\|_{L^\infty}  \le C_p \|p'-p\| + o(\|p'-p\|),
\]
i.e., for any $w\in W$,
\[
\exp( - C_p \|p'-p\| + o(\|p'-p\|)) \le \frac{(Ap)(w)}{(Ap')(w)} \le \exp( C_p \|p'-p\| + o(\|p'-p\|) ).
\]
Therefore, for any set $S\subset W$,
\[
\exp( - C_p \|p'-p\| + o(\|p'-p\|)) \le \frac{\int_S (Ap)(w)dw}{\int_S (Ap')(w)dw} \le \exp( C_p \|p'-p\| + o(\|p'-p\|) ).
\]
This is a more quantitative version of differential privacy adapted to the data distribution $p$.

\begin{remark}
  (a) By working directly with the space of distributions $\P(X)$, the concept of tangent differential privacy is defined without direct reference to the number of data samples in the distribution. Therefore, it allows for changing either a single data sample or a small fraction of samples.

  (b) Working with different distances on $\P(X)$ leads to different types of privacy considerations. For example, the total variation distance on $\P(X)$ corresponds to the Hamming distance case of the $\eps$-differential privacy.  The Wasserstein distance case is related to the metric privacy setup.
\end{remark}

%----------------------------------------------------------
\section{Risk minimization}\label{sec:rm}

We consider the case where the output in $W$ is obtained via an optimization procedure, for example, the empirical risk minimization. Given the data distribution $p(x)$ and risk function $r(w,x)\ge 0$, the goal is
\[
\min_w \int r(w,x) p(x) dx.
\]
The solution $w$ of this minimization problem depends deterministically on $p(x)$. In order to discuss differential privacy, one needs to consider a randomized algorithm with the output distributed over $W$. Here, we propose to adopt entropic regularization following \cite{mcsherry2007mechanism} and seek $q(w)\in \P(W)$
\[
q = \argmin_{q\in\P(W)} \int q(w) \left( \int r(w,x) p(x) dx \right) dw + \beta^{-1} \int q(w) \ln q(w) dw.
\]
The solution is the Gibbs distribution
\begin{equation}\label{eq:qw}
q(w) = \frac{\exp(-\beta \int r(w,x) p(x) dx)}{\int_W \exp(-\beta \int r(w',x) p(x) dx) dw'},
\end{equation}
or simply written as $q(w) \propto \exp(-\beta \int r(w,x) p(x) dx)$. Then, the map $A:\P(X) \rightarrow \P(W)$ takes from $p(x)$ to $q(w)$.

\begin{remark}
  The distribution \eqref{eq:qw} can, in principle, be sampled using Monte Carlo methods, such as Langevin dynamics. One popular differentially private algorithm is noisy-SGD \cite{bassily2014private,bassily2019private}, and there is a close connection between noisy-SGD and Langevin dynamics \cite{welling2011bayesian,cheng2020stochastic}.
\end{remark}

Fixing $p(x)$, let us compute the differential $DA_p: T_p \rightarrow T_q$. Its kernel as a function of $(w,x)$ is given by
\[
-\beta \int \left(q(w) \delta(w-w') - q(w) q(w')\right) r(w',x) dw'.
\]
When $X$ and $W$ are finite sets, this can be written in the matrix form as 
\[
-\beta (\diag(q) - qq^\top) r,
\]
where here $r$ denotes a matrix with value $r(w,x)$ at entry $(w,x)$.

The differential of $D(\log\circ A)_p: T_p \rightarrow T_{\log q}$ can also be computed easily with the chain rule. Its kernel as a function of $(w,x)$ is
\[
-\beta \int \left(\delta(w-w') - q(w')\right) r(w',x) dw'.
\]
Again, when $X$ and $W$ are finite sets, the matrix form is 
\[
-\beta (I - \mathbf{1} q^\top) r,
\]
where $\mathbf{1}$ stands for the all one column vector. Below, we show that the entropic regularization guarantees tangent differential privacy under rather general conditions for both the TV distance and the Wasserstein distance on $\P(X)$.

%-----------
\subsection{Total variation distance on $\P(X)$}

Recall from Example 1 that one has TV norm for $\|\cdot\|_{T_p}$, TV norm for $\|\cdot\|_{T_q}$, and $L^\infty$ norm for $\|\cdot\|_{T_{\log q}}$.

\begin{theorem}
  If $\max_x \int_W q(w) r(w,x) dw \le R$, then $A$ is differentiable at $p$ for the norm pair $(\TV,\TV)$ with bound $2\beta R$.
\end{theorem}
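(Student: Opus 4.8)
The plan is to estimate the $L^1\to L^1$ operator norm of the integral kernel of $DA_p$ directly, using the fact that on the tangent space $T_p$ of mass-zero signed measures the (factor-of-two normalized) total variation norm of $\eps$ coincides with $\int_X |\eps(x)|\,dx$, and likewise on $T_q$. Thus verifying \eqref{eq:diff} with $C_p = 2\beta R$ amounts to bounding an $\ell^1$-type operator norm.

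First I would simplify the kernel. Carrying out the inner integral in the expression for the kernel of $DA_p$ displayed above gives
\[
K(w,x) = -\beta\, q(w)\bigl(r(w,x) - \bar r(x)\bigr),\qquad \bar r(x) := \int_W q(w')\,r(w',x)\,dw',
\]
so that $(DA_p\eps)(w) = \int_X K(w,x)\,\eps(x)\,dx$. Note that $\int_W K(w,x)\,dw = 0$ for every $x$, since $q$ integrates to $1$; this confirms that $DA_p\eps$ is again a mass-zero signed measure, so its TV norm equals the $L^1$ norm of its density.

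Next I would apply Schur's test (equivalently, Fubini together with the triangle inequality): for any $\eps\in T_p$,
\[
\|DA_p\eps\|_{\TV} = \int_W\Bigl|\int_X K(w,x)\,\eps(x)\,dx\Bigr|\,dw \le \int_X |\eps(x)|\Bigl(\int_W |K(w,x)|\,dw\Bigr)dx \le \Bigl(\sup_x \int_W |K(w,x)|\,dw\Bigr)\|\eps\|_{\TV}.
\]
It then remains to bound $\int_W |K(w,x)|\,dw$ uniformly in $x$. Using $r\ge 0$ (hence $\bar r\ge 0$) and $|r(w,x)-\bar r(x)|\le r(w,x)+\bar r(x)$,
\[
\int_W |K(w,x)|\,dw = \beta\int_W q(w)\,|r(w,x)-\bar r(x)|\,dw \le \beta\Bigl(\int_W q(w)\,r(w,x)\,dw + \bar r(x)\Bigr) = 2\beta\,\bar r(x) \le 2\beta R,
\]
where the last inequality is exactly the hypothesis $\max_x\int_W q(w)\,r(w,x)\,dw\le R$. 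Combining the last two displays yields $\|DA_p\|_{\TV\to\TV}\le 2\beta R$.

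I do not expect a genuine obstacle here; the computation is short once the kernel is written in the form above. The only points needing care are the identification of the normalized TV norm with the $L^1$ norm on the mass-zero tangent spaces — where the factor of $2$ in Example~1 is precisely what makes the bound come out as $2\beta R$ rather than $\beta R$ — and the measure-theoretic justification of the Fubini/Schur step when $X$ and $W$ are not finite; in the finite case this last step is just the statement that the $\ell^1\to\ell^1$ norm of the matrix $-\beta(\diag(q)-qq^\top)r$ is its maximum absolute column sum.
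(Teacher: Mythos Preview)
Your proof is correct and essentially mirrors the paper's: both reduce to the $L^1\!\to\!L^1$ column-sum bound and split the kernel into the two nonnegative pieces $q(w)r(w,x)$ and $q(w)\bar r(x)$, the only cosmetic difference being that the paper splits before taking the column integral while you split after via $|r-\bar r|\le r+\bar r$. One small correction to your closing remark: the factor of $2$ in $2\beta R$ comes from that two-term split, not from the normalization in Example~1---since the same scaling is applied to the TV norm on both $T_p$ and $T_q$, the convention cancels in the operator norm.
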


\begin{proof}
  Pick any signed measure $\eps(x) \in T_p$. Up to the $-\beta$ factor, $(DA_p \eps)(w)$ is equal to
  \begin{align*}
        & q(w) \iint \left(\delta(w-w')-q(w')\right) r(w',x) \eps(x) dx dw'\\
    =\; & q(w) \int r(w,x) \eps(x) dx - q(w) \iint q(w')r(w',x) \eps(x) dx dw' 
  \end{align*}
  Among the two terms, the TV norm of the first term is bounded by
  \[
  \int_W \left| \int_X q(w) r(w,x) \eps(x) dx \right| dw \le \left(\max_x \int_W |q(w)r(w,x)|dw\right) \cdot \|\eps\|_{\TV} \le
  R \|\eps\|_{\TV},
  \]
  where we use the non-negativity of $q(w) r(w,x)$. The same estimate applies to the second term.  Putting together shows that $\|DA_p\|_{\TV\rightarrow \TV} \le 2\beta R$.
\end{proof}

\begin{remark}
  (a) The product $\beta R$ controls the sensitivity of $A$. One can achieve this by adopting either small $R$ (safer risk function) or small $\beta$ (stronger entropic regularization).

  (b) One way to ensure $\max_x \int_W q(w) r(w,x) dw \le R$ is $\max_{w,x}|r(w,x)|\le R$. But this can be strict as it does not take into consideration the distributions $p(x)$ and $q(w)$.
  
  (c) The quantity $\max_{x} \int_W q(w) r(w,x) dw$ can be estimated. Suppose that we have an algorithm that can sample $w\sim q(w)$. First, for each $w$, iterate over the data point $x$ and accumulate $r(w,x)$ for each $x$. Second, for each data point $x$, dividing the accumulated value by the number of $w$ gives the estimate of $\int_W q(w) r(w,x) dw$ for $x$. Finally, taking the maximum of these estimates over $x$ gives the approximation to $\max_{x} \int_W q(w) r(w,x) dw$.
\end{remark}

\begin{theorem}
  If $\max_{x,w} |r(w,x)| \le R$, then $A$ satisfies tangent differential privacy at $p$ for the norm pair $(\TV, L^\infty)$ with bound $2\beta R$.
\end{theorem}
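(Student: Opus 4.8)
The plan is to run the same argument as in the proof of the first theorem, but now using the explicit kernel of $D(\log\circ A)_p$ recorded above, namely $-\beta\int\big(\delta(w-w') - q(w')\big)r(w',x)\,dw'$. Fix an arbitrary signed measure $\eps \in T_p$. Evaluating $(D(\log\circ A)_p\,\eps)(w)$ and carrying out the $w'$-integration gives, up to the factor $-\beta$,
\[
\int_X \left( r(w,x) - \int_W q(w') r(w',x)\,dw' \right)\eps(x)\,dx .
\]
For each fixed $w$, the bracket is a function of $x$ alone, so the idea is to bound its supremum over $x$ and then pair with $\eps$ using the duality between $L^\infty$ and the TV norm.

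Next I would estimate that supremum. The first piece satisfies $|r(w,x)| \le R$ by hypothesis. The second piece is an average of $r(w',x)$ against the probability measure $q(w')$, hence $\big|\int_W q(w')r(w',x)\,dw'\big| \le \max_{w',x}|r(w',x)| \le R$ as well. Thus the bracket is bounded in absolute value by $2R$, uniformly in $x$ (and in $w$). Consequently
\[
\left| (D(\log\circ A)_p\,\eps)(w) \right| \le \beta \cdot 2R \cdot \|\eps\|_{\TV}
\]
for every $w$, which is exactly the statement that $\|D(\log\circ A)_p\,\eps\|_{L^\infty} \le 2\beta R\,\|\eps\|_{\TV}$. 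Taking the supremum over $\eps$ with $\|\eps\|_{\TV}\le 1$ yields the claimed operator-norm bound \eqref{eq:tdp} with $C_p = 2\beta R$, i.e.\ tangent differential privacy.

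There is no genuine obstacle here — the computation is essentially a one-line consequence of the kernel formula — but two small points are worth noting. First, unlike in the first theorem one cannot sharpen the constant by exploiting the mean-zero property of $\eps \in T_p$ to discard the averaging term, because that term really depends on $x$; this is the source of the factor $2$. Second, the estimate uses nothing about $q$ beyond it being a probability measure, so the bound is uniform in $q$ (equivalently in $p$) and is genuinely a statement about the risk function $r$ alone.
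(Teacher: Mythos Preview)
Your proof is correct and follows essentially the same route as the paper's: split $(D(\log\circ A)_p\,\eps)(w)$ into the two terms $\int r(w,x)\eps(x)\,dx$ and $\iint q(w')r(w',x)\eps(x)\,dx\,dw'$, bound each in absolute value by $R\|\eps\|_{\TV}$ using $\|r\|_\infty\le R$, and collect the factor $\beta$. Your only cosmetic difference is that you first bound the $x$-kernel pointwise by $2R$ and then pair with $\eps$, whereas the paper pairs with $\eps$ term by term; these are the same estimate. (One small quibble with your closing commentary: the paper's first theorem also carries the factor $2$ and does not exploit the mean-zero property of $\eps$ to drop the averaging term, so the contrast you draw there is not quite accurate.)
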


\begin{proof}
  Pick any $\eps \in T_p$. Up to the $-\beta$ factor, $(D(\log\circ A)_p \eps)(w)$ at each $w$ is 
  \[
  \iint \left(\delta(w-w')-q(w')\right) r(w',x) \eps(x) dx dw' =  \int r(w,x) \eps(x) dx - \iint q(w') r(w',x) \eps(x) dx dw'.
  \]
  Among the two terms, the first one is bounded at $w$ with
  \[
  \left|\int r(w,x) \eps(x) dx \right| \le R \|\eps\|_{\TV}.
  \]
  The second one can be bounded in the same way. Therefore, $\|D(\log\circ A)_p\|_{\TV\rightarrow L^\infty} \le 2\beta R$.
\end{proof}

\begin{remark}
  Examples of bounded $r(w,x)$ include the Savage loss, the tangent loss, and the $0/1$ loss. Using any of these losses automatically guarantees tangent differential privacy for $(\TV, L^\infty)$.
\end{remark}

%-----------
\subsection{Wasserstein distance on $\P(X)$}

Recall from Example 2 that we have the $\dot{H}^{-1}(p)$ norm for $\|\cdot\|_{T_p}$, TV norm for $\|\cdot\|_{T_q}$, and $L^\infty$ norm for $\|\cdot\|_{T_{\log q}}$. Recall that the $\dot{H}^{-1}(p)$ norm and its dual norm are given by
\[
\|\eps\|_{\dot{H}^{-1}(p)}^2 = \min_{f:\grad\cdot(fp) = \eps} \int |f(x)|^2 p(x) dx, \quad
\|g\|_{\dot{H}^1(p)}^2 =  \int |\grad g(x)|^2 p(x) dx.
\]

\begin{theorem}
  If $\| \int_W q(w)r(w,\cdot) dw\|_{\dot{H}^1(p)} \le R$, then $A$ is differentiable at $p$ for the norm pair $(\dot{H}^{-1}(p), \TV)$ with bound $2\beta R$.
\end{theorem}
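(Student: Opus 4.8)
The plan is to mimic the proof of the $(\TV,\TV)$ differentiability result above, replacing the $L^1$–$L^\infty$ duality by the $\dot H^{-1}(p)$–$\dot H^1(p)$ duality. First I would simplify the kernel of $DA_p$: carrying out the inner integral gives $\int_W (q(w)\delta(w-w') - q(w)q(w'))r(w',x)\,dw' = q(w)(r(w,x) - \bar r(x))$, where $\bar r(x) := \int_W q(w)r(w,x)\,dw$ is exactly the function appearing in the hypothesis. Hence for any $\eps\in T_p$, writing $\phi(w) := \int_X r(w,x)\eps(x)\,dx$ and $\bar\phi := \int_X \bar r(x)\eps(x)\,dx = \int_W q(w)\phi(w)\,dw$, we have $(DA_p\eps)(w) = -\beta\,q(w)\bigl(\phi(w) - \bar\phi\bigr)$.

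Next, since the TV norm on $T_q$ is the $L^1(W)$ norm and $q\ge 0$, $\int_W q = 1$,
\[
\tfrac1\beta\|DA_p\eps\|_{\TV} = \int_W q(w)\,\bigl|\phi(w)-\bar\phi\bigr|\,dw \le \int_W q(w)|\phi(w)|\,dw + |\bar\phi| ,
\]
so it suffices to bound each of the two terms on the right by $R\,\|\eps\|_{\dot H^{-1}(p)}$.

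The technical core is the duality inequality $\bigl|\int_X g(x)\eps(x)\,dx\bigr| \le \|g\|_{\dot H^1(p)}\,\|\eps\|_{\dot H^{-1}(p)}$ for any $g$ and any $\eps\in T_p$: choosing $f$ with $\grad\cdot(fp)=\eps$ and $\int|f|^2p = \|\eps\|_{\dot H^{-1}(p)}^2$, integrate by parts to get $\int_X g\,\eps = -\int_X \grad g\cdot f\,p$, then apply Cauchy–Schwarz in $L^2(p)$. Applying this with $g=\bar r$ and invoking $\|\bar r\|_{\dot H^1(p)}\le R$ immediately gives $|\bar\phi| \le R\,\|\eps\|_{\dot H^{-1}(p)}$. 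For the first term I would apply the same inequality pointwise in $w$ with $g = r(w,\cdot)$, namely $|\phi(w)| \le \|r(w,\cdot)\|_{\dot H^1(p)}\|\eps\|_{\dot H^{-1}(p)}$, then integrate against $q(w)$, using the hypothesis again to control $\int_W q(w)\|r(w,\cdot)\|_{\dot H^1(p)}\,dw$. Combining the two bounds yields $\|DA_p\|_{\dot H^{-1}(p)\to\TV}\le 2\beta R$.

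The step I expect to be the main obstacle is precisely this last one — bounding $\int_W q(w)|\phi(w)|\,dw$ by $R\,\|\eps\|_{\dot H^{-1}(p)}$. In the TV case the nonnegativity of $r$ let one pull the absolute value through the $x$-integral, collapsing the first term onto $\bar r$; but the $\dot H^{-1}(p)$ norm is defined only on mean-zero signed measures, so $|\eps|$ is not an admissible perturbation and that shortcut is unavailable. One therefore has to route through the pointwise duality bound, and care is needed because $\int_W q(w)\|r(w,\cdot)\|_{\dot H^1(p)}\,dw$ is in general \emph{no smaller} than $\|\bar r\|_{\dot H^1(p)}$ (Jensen applied to the convex functional $\|\cdot\|_{\dot H^1(p)}$); reconciling this with the stated constant $2\beta R$ — or else reading the hypothesis in the stronger form $\sup_w\|r(w,\cdot)\|_{\dot H^1(p)}\le R$ — is the delicate point in the argument.
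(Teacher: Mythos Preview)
Your decomposition and the paper's are the same: both split $(DA_p\eps)(w)$ (up to $-\beta$) into $q(w)\phi(w)$ and $q(w)\bar\phi$, bound the second piece in TV by $|\bar\phi|\le\|\bar r\|_{\dot H^1(p)}\|\eps\|_{\dot H^{-1}(p)}\le R\|\eps\|_{\dot H^{-1}(p)}$ via the $\dot H^{-1}$--$\dot H^1$ duality, and then face the task of bounding $\int_W q(w)\,|\phi(w)|\,dw$ by $R\|\eps\|_{\dot H^{-1}(p)}$. The paper handles this last step by writing
\[
\int_W\Bigl|\int_X q(w)r(w,x)\eps(x)\,dx\Bigr|\,dw \;\le\; \Bigl\|\int_W |q(w)r(w,\cdot)|\,dw\Bigr\|_{\dot H^1(p)}\|\eps\|_{\dot H^{-1}(p)},
\]
and then invokes the nonnegativity of $q(w)r(w,\cdot)$ to identify the right-hand norm with $\|\bar r\|_{\dot H^1(p)}\le R$.

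Your hesitation about this step is well placed. Nonnegativity of $r$ is exactly what made the analogous move work in the $(\TV,\TV)$ proof, because there one could push the modulus inside and land on $\int_X\bar r(x)\,|\eps(x)|\,dx$. Here that route is blocked, as you say, since $|\eps|$ is not in $T_p$; and the natural alternative---introduce $\sigma(w)=\mathrm{sign}\,\phi(w)$, Fubini, and apply duality to $g_\sigma(x):=\int_W\sigma(w)q(w)r(w,x)\,dw$---produces $\|g_\sigma\|_{\dot H^1(p)}$, which involves $\grad_x r$ rather than $r$ itself, so nonnegativity of $r$ gives no comparison between $\|g_\sigma\|_{\dot H^1(p)}$ and $\|\bar r\|_{\dot H^1(p)}$. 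The paper does not supply a separate argument beyond the nonnegativity remark. So your diagnosis is accurate: the displayed inequality above is exactly the point at issue, and under the hypothesis as stated the bound on the first term is not transparently justified; the stronger hypothesis $\sup_w\|r(w,\cdot)\|_{\dot H^1(p)}\le R$ (which the paper's own Remark afterwards flags as a sufficient condition) would close the gap cleanly via your pointwise-duality route.
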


\begin{proof}
  Pick any $\eps(x) \in \dot{H}^1(p)$. Up to the $-\beta$ factor, $(DA_p \eps)(w)$ is equal to
  \[
  q(w) \int r(w,x) \eps(x) dx - q(w) \iint q(w')r(w',x) \eps(x) dx dw'.
  %(DA_p \eps)(w) = q(w) \iint \left(\delta(w-w')-q(w')\right) r(w',x) \eps(x) dx dw'.
  \]
  %%   the TV norm of $(DA_p \eps)(w)$ is 
  %%   \[
  %%   \int_W \left| \int q(w)\delta(w-w')-q(w)q(w')\right) r(w',x) \eps(x) dx dw' dw.
  %%   \]
  The TV norm of the first term can be bounded by
  \begin{align*}
    \int_W \left| \int_X q(w) r(w,x) \eps(x) dx \right| dw \le
    \left\| \int |q(w)r(w,\cdot)| dw \right\|_{\dot{H}^1(p)} \|\eps\|_{\dot{H}^{-1}(p)} \le R\|\eps\|_{\dot{H}^{-1}(p)},
  \end{align*}
  where we use the non-negativity of $q(w) r(w,\cdot)$. The same estimate can bound the second term. Therefore, $\|DA_p\|_{ \dot{H}^{-1}(p) \rightarrow L^1} \le 2\beta R$.
\end{proof}

%%   Consider 
%%   \begin{align*}
%%     & \int | \int (q(w) \delta(w-w') - q(w)q(w') r(w',x) dw' \eps(x) | dx\\
%%     &\le \int q(w) \delta(w-w')  r(w',x) \eps(x) dw' dx + \int q(w) q(w') r(w',x) \eps(x) dw'dx\\
%%     &\le 2\int q(w) r(w,x) \eps(x) dx \\
%%     &\le 2\int q(w) (\int \grad_x r(w,x) f(x) p(x) dx) dw\\
%%     &\le 2\int q(w) (\int |\grad_x r(w,x)|^2 p(x) dx)^{1/2}(\int |f(x)|^2 p(x) dx)^{1/2} dw\\
%%     &\le 2 R \| \eps\|_{T_p}.    
%%   \end{align*}
%%   Therefore, $A$ is $2\beta R$ tangent differential private.

\begin{remark}
  (a) One way to ensure $\| \int_W q(w)r(w,\cdot) dw\|_{\dot{H}^1(p)} \le R$ is $\max_w \|r(w,\cdot)\|_{\dot{H}^1(p)} \le R$. However, this can be too strict as it does not take into consideration the distributions $p(x)$ and $q(w)$.
  
  (b) The quantity $\| \int_W q(w)r(w,\cdot) dw\|_{\dot{H}^1(p)} = \left( |\int_W q(w) \grad_x r(w,x) dw |^2 p(x) dx \right)^{1/2}$ can be estimated instead. Suppose that we have an algorithm that samples $w\sim q(w)$. First, for each $w$, iterate over $x$ and accumulate $\grad_x r(w,x)$ for each $x$. Second, for each $x$, divide the accumulated value by the number of $w$ to get an estimate of $|\int_W q(w) \grad_x r(w,x) dw |$. Its square is an estimate for $|\int_W q(w) \grad_x r(w,x) dw |^2$. Finally, averaging the squares over $x$ and taking the square root gives an approximation to $\| \int_W q(w)r(w,\cdot) dw\|_{\dot{H}^1(p)}$.
\end{remark}

\begin{theorem}
  If $\max_{w} \|r(w,\cdot)\|_{\dot{H}^1(p)} \le R$, then $A$ satisfies tangent differential privacy at $p$ for the norm pair $(\dot{H}^{-1}(p), L^\infty)$ with bound $2\beta R$.
\end{theorem}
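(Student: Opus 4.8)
The plan is to mirror the proof of the TV/$L^\infty$ theorem, simply replacing the crude bound $|\int r\,\eps|\le R\|\eps\|_{\TV}$ by the $\dot H^{\pm1}(p)$ duality inequality $|\int g\,\eps|\le \|g\|_{\dot H^1(p)}\|\eps\|_{\dot H^{-1}(p)}$. Concretely, fix $\eps\in T_p$ and recall that, up to the $-\beta$ factor, $(D(\log\circ A)_p\eps)(w)$ at each $w$ equals $\int r(w,x)\eps(x)dx - \iint q(w')r(w',x)\eps(x)\,dx\,dw'$. I want to bound each of these two terms, uniformly in $w$, by $R\|\eps\|_{\dot H^{-1}(p)}$.

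First I would record the duality inequality itself: if $\grad\cdot(fp)=\eps$ then for any $g$,
\[
\int g\,\eps = \int g\,\grad\cdot(fp) = -\int \grad g\cdot f\, p\,dx \le \|g\|_{\dot H^1(p)}\Big(\int |f|^2 p\,dx\Big)^{1/2},
\]
and taking the infimum over admissible $f$ gives $|\int g\,\eps|\le \|g\|_{\dot H^1(p)}\|\eps\|_{\dot H^{-1}(p)}$. Applying this with $g=r(w,\cdot)$ bounds the first term by $\|r(w,\cdot)\|_{\dot H^1(p)}\|\eps\|_{\dot H^{-1}(p)}\le R\|\eps\|_{\dot H^{-1}(p)}$ by hypothesis. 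The fact that $\eps$ is a tangent vector, hence of total mass zero, is what makes the pairing insensitive to additive constants in $g$, consistent with $\dot H^1(p)$ being only a seminorm, so no ambiguity arises.

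Second, for the double integral I would apply the same duality with $g(x)=\int q(w')r(w',x)\,dw'$, so that term is at most $\|\int q(w')r(w',\cdot)\,dw'\|_{\dot H^1(p)}\,\|\eps\|_{\dot H^{-1}(p)}$. It then remains to check $\|\int q(w')r(w',\cdot)\,dw'\|_{\dot H^1(p)}\le R$. Since $\grad_x$ commutes with the $w'$-integral, this quantity equals $\big(\int |\int q(w')\grad_x r(w',x)\,dw'|^2 p(x)\,dx\big)^{1/2}$, which by Minkowski's integral inequality (using $q\ge 0$ and $\int q(w')dw'=1$) is at most $\int q(w')\|r(w',\cdot)\|_{\dot H^1(p)}\,dw'\le R$. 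This is the one step that genuinely differs from the TV proof: there the second term is handled by a verbatim repetition of the first estimate, whereas here it requires the convexity of the $\dot H^1(p)$ seminorm, i.e. a Jensen/Minkowski step, because $q$ is averaged inside rather than outside the norm.

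Combining the two bounds, $|(D(\log\circ A)_p\eps)(w)|\le \beta(R+R)\|\eps\|_{\dot H^{-1}(p)}$ for every $w$, hence $\|D(\log\circ A)_p\|_{\dot H^{-1}(p)\to L^\infty}\le 2\beta R$, which is exactly \eqref{eq:tdp} with $C_p=2\beta R$. The only place that needs care — the main obstacle in an otherwise routine argument — is making the duality pairing $|\int g\,\eps|\le\|g\|_{\dot H^1(p)}\|\eps\|_{\dot H^{-1}(p)}$ fully rigorous, including the seminorm bookkeeping (zero-mass $\eps$, vanishing of boundary terms in the integration by parts under whatever decay or compactness assumptions underlie Example 2, and the passage to the infimum over $f$). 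Once that is in place, the rest is the previous theorem's proof with the norms swapped.
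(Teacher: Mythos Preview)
Your proposal is correct and follows the paper's proof essentially verbatim: split $(D(\log\circ A)_p\eps)(w)$ into the two terms and bound each by $R\|\eps\|_{\dot H^{-1}(p)}$ via the $\dot H^{\pm1}(p)$ duality. One simplification: the Minkowski step you flag as ``the one step that genuinely differs'' is not actually needed---just as in the TV proof (and as the paper does here, saying ``the second term can be bounded in the same way''), you can bound $\bigl|\int r(w',x)\eps(x)\,dx\bigr|\le R\|\eps\|_{\dot H^{-1}(p)}$ pointwise in $w'$ by the first-term estimate, and then integrate against $q(w')\,dw'$.
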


\begin{proof}
  Pick any $\eps \in L^1$. Up to the $-\beta$ factor, $(D(\log\circ A)_p \eps)(w)$ at each $w$ is 
  \[
  \iint \left(\delta(w-w')-q(w')\right) r(w',x) \eps(x) dx dw' =  \int r(w,x) \eps(x) dx - \iint q(w') r(w',x) \eps(x) dx dw'.
  \]
  The first term is bounded at $w$ by
  \[
  \int r(w,x) \eps(x) dx \le \|r(w,\cdot)\|_{\dot{H}^1(p)}\|\eps\|_{\dot{H}^{-1}(p)} \le R\|\eps\|_{\dot{H}^{-1}(p)}.
  \]
  The second term can be bounded in the same way. Therefore, $\|D(\log\circ A)_p\|_{\dot{H}^{-1}(p) \rightarrow L^\infty} \le 2\beta R$.
\end{proof}

%----------------------------------------------------------
\section{Discussion}\label{sec:disc}

In this note, we propose tangent differential privacy as a new form of differential privacy. Compared with the usual differential privacy that is defined uniformly across data distributions, tangent differential privacy is tailored towards a specific data distribution of interest. For empirical risk minimization of supervised learning, entropic regularization guarantees tangent differential privacy under rather general conditions on the risk function. Some directions for future work include
\begin{itemize}
\item Extend the framework to unsupervised learning and online learning problems;
\item Explore alternatives or approximations to \eqref{eq:qw} since sampling the Gibbs distribution $q(w)$ can be challenging when it exhibits meta-stability.
\end{itemize}

\bibliographystyle{abbrv}

\bibliography{ref}

\end{document}